\definecolor{cvprblue}{rgb}{0.21,0.49,0.74}
\title{Noise Adaptor: Enhancing Low-Latency Spiking Neural Networks through Noise-Injected Low-Bit ANN Conversion}
\author{Chen Li\\
King’s College London\\
London, UK\\
{\tt\small chen.7.li@kcl.ac.uk}
\and
Bipin.Rajendran\\
King’s College London\\
London, UK\\
{\tt\small bipin.rajendran@kcl.ac.uk}
}
\begin{document}
\maketitle
\begin{abstract}
We present Noise Adaptor, a novel method for constructing competitive low-latency spiking neural networks (SNNs) by converting noise-injected, low-bit artificial neural networks (ANNs). This approach builds on existing ANN-to-SNN conversion techniques but offers several key improvements: (1) By injecting noise during quantized ANN training, Noise Adaptor better accounts for the dynamic differences between ANNs and SNNs, significantly enhancing SNN accuracy. (2) Unlike previous methods, Noise Adaptor does not require the application of run-time noise correction techniques in SNNs, thereby avoiding modifications to the spiking neuron model and control flow during inference. (3) Our method extends the capability of handling deeper architectures, achieving successful conversions of activation-quantized ResNet-101 and ResNet-152 to SNNs. We demonstrate the effectiveness of our method on  CIFAR-10 and ImageNet, achieving competitive performance. The code will be made available as open-source.

\end{abstract}    
\section{Introduction}
\label{sec:intro}
In recent years, deep spiking neural networks (SNNs) have gained substantial attention due to their rapid, efficient information processing abilities which closely mimic those of biological neural systems \cite{diehl2015fast,rueckauer2017conversion,li2023unleashing,deng2021optimal,wu2018spatio,zhang2018plasticity,meng2022training,fang2021deep,li2021free}. As a biologically plausible alternative to artificial neural networks (ANNs), SNNs provide valuable insights into the functioning of the human brain and have become one of the focal points for researchers in the pursuit of developing next-generation artificial intelligence (AI) systems \cite{maass1997networks}. This growing interest is driven by the potential benefits SNNs can offer in terms of power consumption, latency, and other performance metrics \cite{roy2019towards}, which promise significant advancements in the field of AI.

To fully harness the potential of SNNs, there have been rapid strides in both hardware and algorithms. On the hardware front, remarkable progress has been made in the evolution and enhancements of neuromorphic systems, including, but not limited to SpiNNaker 2 \cite{mayr2019spinnaker}, BrainScaleS-2 \cite{pehle2022brainscales}, Loihi 2 \cite{orchard2021efficient}, and DYNAP-SE2 \cite{ccakal2023training}. These cutting-edge hardware prototypes feature enhanced chip capacity, innovative solutions for efficient spike routing, and breakthroughs in architectural design. 

Concurrently, algorithmic research has demonstrated that SNNs can achieve accuracy levels on par with ANNs through ANN-to-SNN conversion \cite{diehl2015fast, rueckauer2017conversion, sengupta2019going}. However, this often requires long simulation times, negating the benefits of using SNNs in the first place. As a result, research has shifted towards low-latency SNNs, also known as fast SNNs, which aim to achieve high inference accuracy with as few time steps as possible \cite{li2022quantization, li2023unleashing, hao2023bridging}. Given the heightened interest and fast progress in low-latency SNN algorithms, the latency required for SNNs to deliver satisfactory accuracy has consistently decreased from thousands of time steps \cite{rueckauer2017conversion, sengupta2019going} to hundreds \cite{deng2021optimal,li2021free} and, more recently, fewer than ten time steps \cite{li2021differentiable, fang2021deep}. 

A particularly promising approach, known as quant-ANN-to-SNN conversion, has proven highly effective in building low-latency SNNs \cite{li2022quantization, bu2021optimal, meng2022training}. This approach involves quantizing the activations of ANNs prior to their conversion into SNNs. Unlike other low-latency SNN learning algorithms that rely on surrogate gradients, quant-ANN-to-SNN conversion does not require unfolding the neural network in the temporal dimension \cite{neftci2019surrogate} during training. By avoiding this temporal unfolding step, the computational complexity and memory complexity are reduced from $\mathcal{O}(T)$ to $\mathcal{O}(1)$, where $T$ represents the number of time steps in an SNN. This significantly accelerates the training process and improves memory efficiency. As a result, quant-ANN-to-SNN conversion becomes an appealing option for rapidly developing low-latency SNNs, without the high training costs typically associated with other methods. The quant-ANN-to-SNN conversion algorithms have been widely applied to image classification \cite{li2022quantization}, object detection \cite{hu2023fast}, and natural language processing \cite{you2024spikezip}.

The construction of low-latency SNNs through conversion from activation-quantized ANNs often encounters a significant challenge: the presence of occasional noise, also known as unevenness error.  This issue stems from the variability in postsynaptic spike sequences that are not considered during the training of low-bit ANNs  \cite{li2022quantization, bu2021optimal,meng2022training}. This noise can lead to diminished accuracy in SNNs and increased time steps required to reach the desired accuracy level. To address this issue, various strategies have been proposed, focusing primarily on integrating artificial noise correction mechanisms during SNN inference \cite{li2022quantization, hu2023fast, hao2023reducing, hao2023bridging}. Despite these efforts, a fully comprehensive and efficient solution for seamless, end-to-end conversion from low-bit ANNs to low-latency SNNs, while effectively mitigating this challenge, remains elusive. Furthermore, previous attempts to construct low-latency SNNs from quantized ANNs have often struggled to scale the method to deep learning models deeper than ResNet-50.

In this study, we introduce \textbf{Noise Adaptor}, a novel method to mitigate both of these challenges in SNNs developed via quant-ANN-to-SNN conversion. Our technical contributions are threefold:

\begin{itemize}

\item We establish that occasional noise, a primary concern in current quant-ANN-to-SNN studies, can be effectively modeled and managed through a learning-based approach. Our research indicates that injecting controllable noise during training can better capture spike dynamics, thereby improving the tolerance to occasional noise.

\item We address the scalability issue of quant-ANN-to-SNN conversion. We reported the results of converting low-bit ANN with up to 101 and 152 layers, achieving competitive SNN performance.

\item Our proposed method is agnostic to the type of pooling layer used in the ANN. This allows us to optimize and convert ANNs with max-pooling to SNNs with average-pooling while maintaining competitive performance.

\end{itemize}

\section{Related Work}
\paragraph{Fast SNNs.}
As the demand for real-time, energy-efficient computing continues to grow, fast SNNs have come to the forefront of contemporary SNN research. The leading techniques for developing fast SNNs are quant-ANN-to-SNN conversion \cite{li2022quantization, bu2021optimal,meng2022training}
and surrogate gradients \cite{neftci2019surrogate,shrestha2018slayer,wu2018spatio,fang2021deep}, and our research falls into the former catagory. The effectiveness of the quantized ANN-to-SNN conversion technique lies in the fact that, when an ANN is trained with quantized activations to mimic discrete spike counts, it becomes more amenable to conversion into an SNN compared to those trained without such constraints \cite{li2022quantization, bu2021optimal,meng2022training}. A notable challenge associated with the quant-ANN-to-SNN conversion is that the reduction in latency amplifies the impact of occasional noise \cite{li2022quantization}, which in turn hinders the achievement of high accuracy within a limited number of time steps. Several methods have been proposed to address this issue. QCFS \cite{bu2021optimal} provides a theoretical analysis of the occasional noise and amortizes this noise by extending the simulation duration. QFFS \cite{li2022quantization} detects occasional noise at the runtime of SNN and generates negative spikes to counteract it. More recently, SRP \cite{hao2023reducing} and COS \cite{hao2023bridging} have employed a two-stage approach for noise reduction: initially running an SNN for multiple time steps to identify occasional noise, followed by a second run to correct it.

\begin{figure}
\centering
\includegraphics[width=0.465\textwidth]{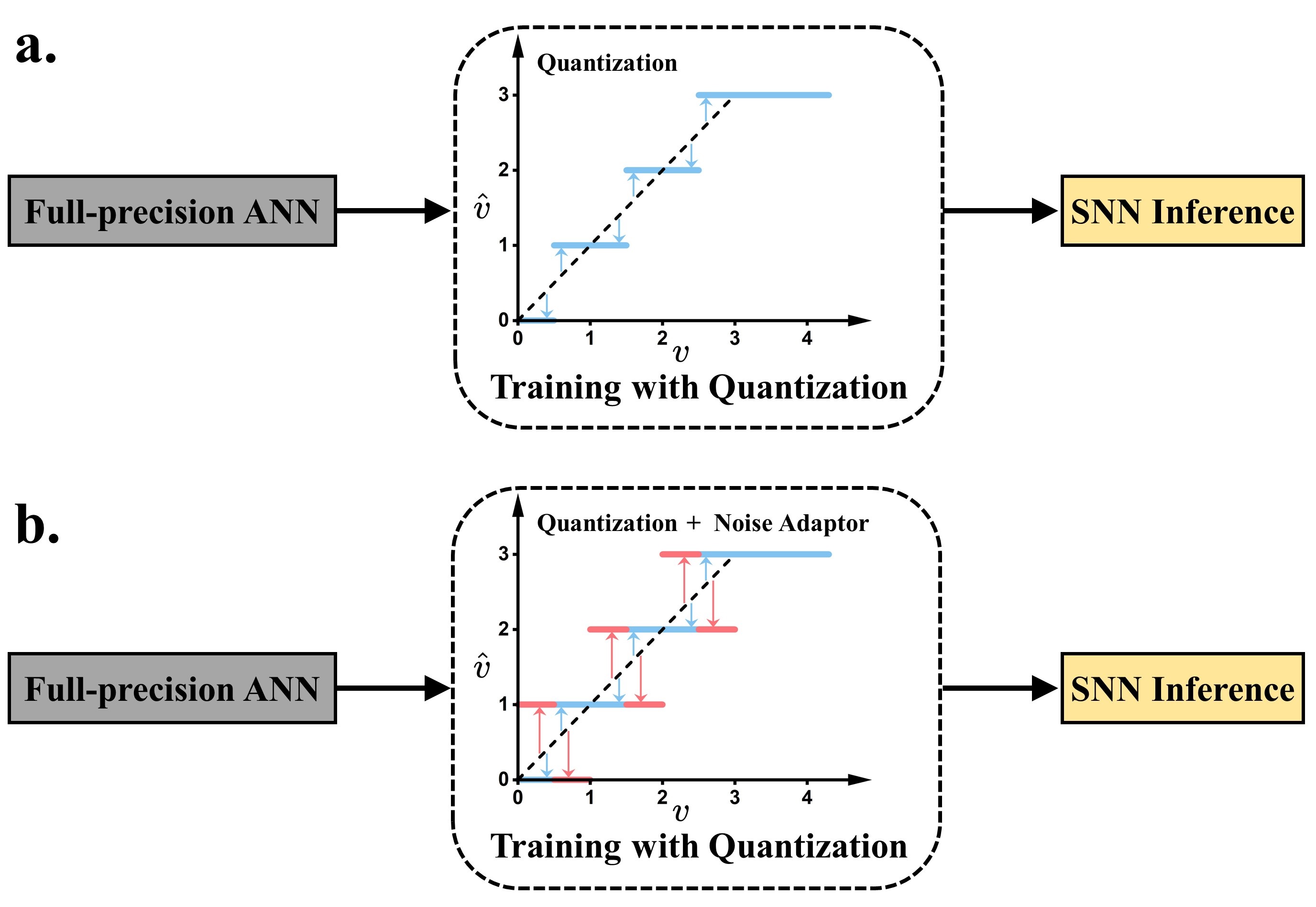} 

\caption{ \textbf{a.} In the standard quant-ANN-to-SNN conversion process, ANNs are trained with activation quantization to model the discrete spikes characteristic of SNNs. \textbf{b.} Our method goes further by introducing controllable noise into the activation quantization, optimizing the training process for both quantization accuracy and noise resilience.}

\label{fig1}
\end{figure}

\paragraph{Noise in SNNs.}
A fundamental trait of SNNs is the existence of noise \cite{kempter1998extracting,stein2005neuronal,faisal2008noise,ma2023exploiting}. Understanding the role and implications of noise is crucial for improving the robustness and functionality of SNNs. During the process of ANN-to-SNN conversion, SNNs are subject to multiple noise sources that can impair inference performance, such as rate-coding noise, sub-threshold noise, and supra-threshold noise \cite{diehl2015fast}. To counteract these effects, a range of strategies have been developed, including the use of analog input coding \cite{rueckauer2017conversion}, soft-reset (or reset-by-subtraction) \cite{han2020rmp,rueckauer2017conversion}, bias shift \cite{deng2021optimal}, and pre-charging the membrane potential \cite{hwang2021low}. For applications aiming to create low-latency or ultra-low-latency SNNs by ANN-to-SNN conversion, it is necessary to address and minimize occasional noise to achieve the desired accuracy \cite{li2022quantization}. Moreover, when deploying SNNs on neuromorphic hardware \cite{schemmel2010wafer}, especially on those utilizing noisy components like memristors \cite{burr2017neuromorphic} and skyrmions \cite{chen2020nanoscale} for weight storage, it is critical to account for noise. Doing so helps prevent system malfunctions and significant accuracy losses, ensuring the optimal functioning of spike-based computation systems.

\section{Preliminary and Background}
In this section, we provide an overview of the foundational concepts relevant to our work. We first describe the spiking neuron and synaptic models employed in SNNs. We then revisit the framework for converting the activation-quantized ANNs into SNNs.

\paragraph{Spiking Neuron and Synaptic Models.} We use the integrate-and-fire model for spiking dynamics in this paper.  To allow for more precise information processing within spiking neurons, the reset mechanism of the neuronal model is changed from reset-to-resting-potential to reset-by-subtraction, as discussed in previous works \cite{bu2021optimal,li2022quantization, rueckauer2017conversion,han2020rmp}. The model's dynamics are described by the following equations:
\begin{equation}\label{eq1}
\bm{u}_{t}^{l} = \bm{u}_{t-\Delta{t}}^{l}+\widetilde{\bm{W}}_t^l\bm{z}_{t}^{l-1}{th}^{l-1}+\widetilde{\bm{B}}^l-\bm{z}_{t-\Delta{t}}^{l}{th}^{l}, 
\end{equation}
\begin{equation}\label{eq2}
\bm{z}_{t}^{l} =\Theta(\bm{u}_{t}^{l}-{th}^{l}).
\end{equation} 
\noindent In Equation \eqref{eq1}, $\bm{u_}t^l$ and $\bm{u}_{t-\Delta{t}}^l$ represent the membrane potential of spiking neurons situated in layer $l$ at time $t$ and time $t-\Delta{t}$, respectively, with $\Delta{t}$ denoting the simulation time resolution. $\widetilde{\bm{W}}_t^l$ and $\widetilde{\bm{B}}^l$ are the synaptic weights and biases within layer $l$. The spikes generated by spiking neurons in layer $l$ at time $t$ are denoted by 
$\bm{z}_{t}^l$, which is 
calculated according to the membrane potential $\bm{u_}t^l$, the firing threshold ${th}^{l}$, and Heaviside step function $\Theta(\cdot)$, as shown in Equation \eqref{eq2}. To optimize GPU simulations, deep SNNs often simplify synaptic models, treating synaptic weights as constant and time-invariant \cite{bogdan2021towards,gammaitoni1998stochastic}, represented as
$\widetilde{\bm{W}}_t^l= \widetilde{\bm{W}}^l$, where \(\widetilde{\bm{W}}_t^l\) represents the synaptic weight at any given time \(t\), and \(\widetilde{\bm{W}}^l\) denotes the constant synaptic weight.

\paragraph{Quant-ANN-to-SNN Conversion.} The process of converting an activation-quantized artificial neural network into a spiking neural network is referred to as Quant-ANN-to-SNN conversion \cite{li2022quantization, bu2021optimal,meng2022training}. Quantization serves as a means of compressing information, thus decreasing the number of spikes required by the spiking neurons to represent ANN activations \cite{li2022quantization}. This technique has been widely reported to bring considerable savings in spike counts and SNN inference latency \cite{li2022quantization, bu2021optimal,meng2022training, hu2023fast,hao2023bridging,hao2023reducing}. This conversion works effectively because the quantization and clipping used in ANNs mimic key SNN characteristics, such as spike discreteness and the cut-off frequency (the upper limit firing rate for spiking neurons), thereby minimizing accuracy loss during conversion. The activation quantization in ANNs, exemplified here with ReLU, is formalized as
\begin{equation}\label{eq3}
	\hat{v}^l = s^l\left\lfloor clip \left( \frac{v^l}{s^l}, 0, p\right) \right\rceil.
\end{equation}
\noindent In this equation, \(v^l\) represents the original full-precision activation in layer \(l\), and \(\hat{v}^l\) denotes the quantized activation. The term $s^l$ is the quantization scale factor, and $clip(z, 0, p)$ truncates the elements of $z$ to the range $[0, p]$, where $p$ is a positive integer as the quantization upper bound\footnote{In ANN activation quantization, $p$ is typically derived using the formula $p=2^b-1$, with $b$ being the bit precision. Nevertheless, when it comes to quant-ANN-to-SNN conversion,  the value of $p$ is more flexible and it can be any positive integer. 
}. The rounding operation $\lfloor z \rceil$  rounds $z$ to the nearest integer. The parameters of the SNN are derived from the ANN parameters as follows:
\begin{equation}\label{eq4}
\left\{
\begin{aligned}
   \widetilde{\bm{W}}^1 &= \bm{W}^l, \\
   \widetilde{\bm{B}}^1 &= \bm{B}^l, \\
   th^l &= ps^l.
\end{aligned}
\right.
\end{equation}
\noindent Here, $\bm{W}^l$ and $\bm{B}^l$ represent the weight and bias of layer $l$ in the ANN, respectively. The spiking threshold ${th}^l$ is a function of the positive integer threshold $p$ and the quantization step size ${s^l}$. The integrate-and-fire mechanism in SNNs approximates the``flooring'' operation in ANN quantization, in contrast to the ``rounding'' operation used in Equation \ref{eq1}. This discrepancy is compensated by initializing the membrane potential at \(0.5 {th}^l\) across all hidden layers at the start of SNN simulation \cite{li2022quantization, bu2021optimal}.

\section{Noise Adaptor}
In this section, we elucidate our methodology for developing low-latency SNNs, which we refer to as Noise Adaptor.
We begin by outlining the formulation of Noise Adaptor, detailing the calculation involved in both the forward passes and backward gradients. We then delve into the design rationale behind our noise injection technique and describe its impact on the activation quantizer. Our method enhances the existing quant-ANN-to-SNN conversion process, improving several key aspects. These improvements, along with connections to related theoretical work, are discussed in the following sections.

\subsection{Formulation}
When applying Noise Adaptor during ANN training, the computations on the forward pass of artificial neurons are described below:
\begin{equation}\label{eq5}
	\hat{v}^l = s^l\left\lfloor clip\left( \frac{v^l}{s^l}+\epsilon, 0, p\right) \right\rceil,
\end{equation}
\begin{equation}\label{eq6}
    \epsilon \sim \mathcal{U}(-0.5, 0.5).
\end{equation}
\noindent Compared to the activation quantization function applied in the existing quant-ANN-to-SNN conversion methods, we have one additional parameter added to the pre-activation value. $\epsilon$ represents a uniformly distributed noise term over the interval $(-0.5, 0.5)$, as shown in Equation \eqref{eq6}. The dimensions of this noise term are identical to those of $v^l$. $\epsilon$ is independently sampled in the quantizer in each layer during forward pass. The sampled $\epsilon$ will also affect backward gradients according to the equation as follows:
\begin{equation}\label{eq7}
\frac{\partial{\hat{v}^l}}{\partial{v^l}} =
\begin{cases}
0								& \text{if $\frac{v^l}{s^l}+\epsilon \leqslant 0$} \\
1			& \text{if $0 < \frac{v^l}{s^l}+\epsilon  < p$} \\
0.							& \text{if $\frac{v^l}{s^l}+\epsilon  \geqslant p$} \\
\end{cases}
\end{equation}
\noindent This approach ensures that the influence of noise remains consistent throughout both forward pass and backward pass. This extra noise in gradients also facilitates exploring a broader range of the loss landscapes, potentially leading to the discovery of better local minima.
This backward gradient is calculated by applying the straight-through estimator to bypass non-differential computations \cite{bengio2013estimating}.
Our method applies the trainable quantization scale factor $s^l$ to let the network learn the optimal scale that minimizes the loss due to quantization. The gradient of $\frac{\partial{\hat{v}^l}}{\partial{s^l}}$ is calculated in the following manner:
\begin{equation}\label{eq8}
\frac{\partial{\hat{v}^l}}{\partial{s^l}} =
\begin{cases}
0									& \text{if $\frac{v^l}{s^l}+\epsilon \leqslant 0$} \\
-(\frac{v^l}{s^l}+\epsilon) + \lfloor \frac{v^l}{s^l} +\epsilon\rceil			& \text{if $0 < \frac{v^l}{s^l}+\epsilon < p$} \\
p.									& \text{if $\frac{v^l}{s^l}+\epsilon \geqslant p$} \\
\end{cases}
\end{equation}

\subsection{Training with Noise Injection }

After integrating the noise term into the activation quantization function of ANNs, the activation output $\hat{v}^l$ becomes stochastic and capable of transitioning to adjacent discrete states either upwards or downwards by $s^l$. This stochasticity mirrors the behavior of spiking neurons, which, under the influence of occasional noise, can either increase or decrease their spike count, typically by one \cite{li2022quantization,hao2023reducing}. It is possible to model larger spike jumps by injecting higher amplitude noise. However, we found that this can destabilize the training process, especially with intricate datasets such as ImageNet. Therefore, our approach only models the simplest situations where the spike count changes are limited to one. 

Another key consideration of our method is that the probability of state transitions in a spiking neuron should be related to its input values. For instance, a spiking neuron with an input of $0.99{th}^l$ (${th}^l$ is the firing threshold of spiking neurons in layer $l$) is more likely to produce an additional spike in response to occasional noise than one with an input of $0.01 {th}^l$. The closer the input is to the threshold of the next quantization level, the higher the chance of transitioning to a different integer state. To accurately model this effect, we add the noise term $\epsilon$ before the clipping and rounding steps in the ANN activation function. This allows the pre-activation amplitude to influence the likelihood of state changes.

Figure \ref{fig1} provides a comprehensive illustration of the impact of noise injection on the conversion from quantized ANN to SNN. For simplicity, we set the parameters to $p=3$ and $s=1$. In Figure \ref{fig1}.a, the standard conversion process is shown, where each variable $v$ is consistently rounded in a fixed direction, either up or down. In contrast, Figure \ref{fig1}.b shows that Noise Adaptor introduces stochasticity in activation, which creates additional possibilities by allowing values to be rounded to alternative states. This increased variability captures the spiking dynamics more effectively, leading to a smoother ANN-to-SNN conversion.

Our method exhibits an activation quantizer behavior similar to that of stochastic rounding, a technique employed in finite precision arithmetic and neural network quantization algorithms. The key differences between our method and stochastic rounding are outlined below:  (1). Objective Differences. Stochastic rounding is designed to minimize the rounding error that accumulates in neural networks. Instead, our proposed noise injection strategy is specifically tailored to emulate the stochastic behavior of spiking neurons, where noise can cause fluctuations in spike counts. (2). Format Differences. Stochastic rounding requires a fixed noise type and amplitude. However, Noise Adaptor offers flexibility in both the form and amplitude of the injected noise. For instance, it can inject different types of noise, such as Gaussian noise, or adjust the noise amplitude, like doubling it to \( \epsilon \sim \mathcal{U}(-0.5, 0.5) \), as long as it can model the spike jumping behavior while maintaining training stability.

\subsection{Updations on the Conversion Process }
Our study enhances the quant-ANN-to-SNN conversion process by introducing several innovative updates to existing frameworks.

Previous approaches address noise, including occasional noise, only after completing the ANN-to-SNN conversion. In contrast, our method integrates the modeling of occasional noise directly within the ANN quantization training phase. By introducing noise during both the forward and backward passes of training, our model learns to adapt to it. This learning-based method enables seamless end-to-end conversion from ANNs to SNNs, removing the necessity to alter the spiking neuron model or manage control flow during SNN inference.

A common challenge in leveraging pre-trained ANN models for ANN-to-SNN conversion is the discrepancy in pooling layer preferences -- SNNs typically use average-pooling, while ANNs favor max-pooling. This difference has limited the ability of previous methods to effectively utilize pre-trained ANN models with max-pooling layers. Our approach overcomes this limitation by being agnostic to the type of pooling layer. During ANN quantization training, we initialize with a full-precision pre-trained ANN model that uses max-pooling. We then substitute max-pooling with average-pooling and proceed with activation quantization training. Our experimental results confirm that this substitution does not hinder training convergence.

\subsection{Connections to Existing Theoretical Work}
Our study is related to some theoretical research on quant-ANN-to-SNN conversion. This line of work primarily focuses on providing a theoretical guarantee of minimizing the disparity between the response curve of spiking neurons and the activation quantization function used in ANNs. An optimal conversion from a quantized ANN to an SNN is achieved when this disparity is zero, resulting in a conversion error of zero. However, these approaches also face certain challenges. Specifically, the behavior of occasional noise in SNNs becomes visible only during actual SNN runtime, making it difficult to precisely model this noise within the ANN quantization function. Consequently, most current theoretical approaches exclude this noise term in their proofs, offering only an approximate estimation of the conversion error. Recent studies have proposed methods to manage occasional noise during SNN runtime \cite{hao2023reducing, hao2023bridging}; however, these typically require additional simulation stages, which increase the complexity of the SNN. Furthermore, these studies often assume equivalence between activation-quantized ANNs and SNNs under the condition that the time step $T=p$, where $p$ is the quantization upper bound as defined in Equation \ref{eq3}. A theoretically optimal conversion is only feasible at this one specific time step. Providing valid theoretical proofs becomes significantly more complex when $T>p$.

\begin{figure}
\centering
\includegraphics[width=0.465\textwidth]{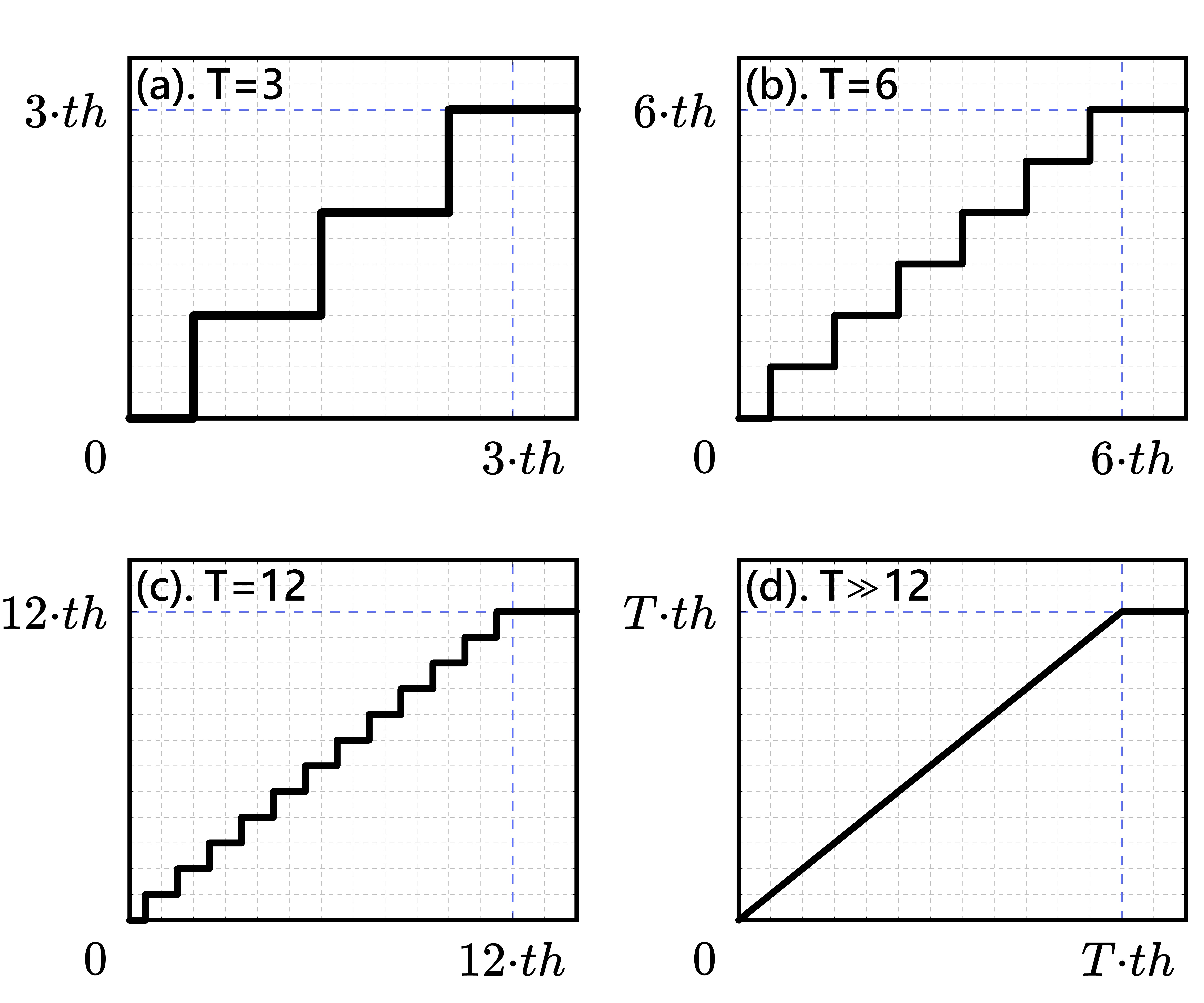} 

\caption{The response curve of spiking neurons for time steps $T=3$, $T=6$, $T=12$, and for time steps much larger than 12. As $T$ increases, the response smoothens and converges to a rectified-ReLU function.}
\label{fig2}
\end{figure}

Our proposed Noise Adaptor algorithm contributes to this line of work in the following ways: First, we introduced a practical and simple noise term to enable modeling occasional noise in the ANN quantization equation. Our experimental results demonstrate the effectiveness of this approach in reducing conversion error and enhancing SNN performance. Second, our noise model scales effectively to very deep neural network architectures. In the experimental section, we present successful conversion results for quantized ResNet-101 and ResNet-152 to low-latency SNNs. Furthermore, Noise Adaptor enables the ANN model to implicitly optimize for SNNs in scenarios where the simulation time $T$ significantly exceeds $p$. Below, We provide visualizations, detailed explanations, and the corresponding mathematical formulation of this contribution.

We visualize the response curve of spiking neurons for different time steps, specifically \(T=3\), \(T=6\), \(T=12\), and \(T \gg 12\), in Figure \ref{fig2}. Here, \(th\) denotes the firing threshold of the spiking neuron, and we assume the input to the neuron is a constant value. The x-axis represents the accumulated input values over time $T$, while the y-axis indicates the output spike counts. To maintain consistency in amplitude between the input and output of the spiking neuron, we multiply the output spike counts by the firing threshold. As shown in Figure \ref{fig2}(a), when \(T=3\), the maximum output is \(3 \cdot th\), which corresponds to generating 3 spikes in response to the input of \(3 \cdot th\). As the time step \(T\) increases to $6$ and $12$, more discrete values can be observed between 0 and the maximum output, resulting in a smoother curve. When \(T\) becomes sufficiently large, the response curve of the spiking neuron converges to that of a rectified-ReLU function, as depicted in Figure \ref{fig2}(d). The activation function in ANN that corresponds to this SNN response curve can be derived using the quant-ANN-to-SNN conversion equations. We first identify the input that yields the maximum output in a single time step, which should be \(th\). According to Equation \eqref{eq4}, \(th\) is calculated by the product of the activation quantization parameters \(s\) and \(p\) in the ANN. Therefore, the activation function we aim to optimize in the ANN should be the rectified-ReLU function, with its inflection point set at \(ps\). This can be mathematically expressed as:
\begin{equation}\label{eq9}
\hat{v} = \text{clip}(v, 0, sp). 
\end{equation}

Our proposed Noise Adaptor can implicitly optimize this function during ANN quantization training. Specifically, after incorporating the Noise Adaptor, the expected mean of the applied activation quantization function in ANN is given by:
\begin{equation}\label{eq10}
	\mathbb{E}(\hat{v}^l) = \text{clip}(v^l, 0, {s^l}p).
\end{equation}

\noindent This expression is consistent with the rectified-ReLU function presented in Equation \eqref{eq9}. The proof of this equation is provided in the Supplementary Materials. These findings suggest that Noise Adaptor can facilitate the implicit optimization of the SNN response curve, particularly at higher time step $T$. 

\begin{table}[htb!]
    \centering

     \resizebox{\columnwidth}{!}
     {
    \begin{tabular}{ccccccc}
    \toprule
    \multirow{2}{*}{\textbf{CIFAR-10}} & \multirow{2}{*}{\textbf{ANN}} & \multicolumn{5}{c}{$\pmb{T}$}\\
    \cmidrule(lr){3-7}

   & &T=1 & T=2 & T=4 & T=8 & T=16  \\
    \midrule
    
ResNet-18 w/o NA	&\textbf{95.50}&91.62& 93.65 &95.18& 96.08 & 96.38 \\
ResNet-18 w/ NA  &94.97&\textbf{93.71}& \textbf{95.26} &\textbf{95.95}& \textbf{96.61} & \textbf{96.72}	\\

\midrule[\heavyrulewidth]

    \multirow{2}{*}{\textbf{ImageNet}} & \multirow{2}{*}{\textbf{ANN}} & \multicolumn{5}{c}{$\pmb{T}$}\\
    \cmidrule(lr){3-7}

   & & T=8 & T=16 & T=32 & T=64 & T=128  \\
    \midrule
    
ResNet-50 w/o NA&\textbf{75.96}&33.94&56.42& 69.33 &74.95 & 76.84 \\
ResNet-50 w/ NA  &75.67&\textbf{37.08}&\textbf{58.76}&\textbf{71.50} &\textbf{75.69}& \textbf{76.95}	\\

      \bottomrule
    \end{tabular} 
     } 
    \vspace{-1mm}
    \caption{Comparisons of quant-ANN-to-SNN conversion with and without Noise Adaptor (NA). We benchmark ANN accuracy after training with activation quantization, as well as SNN accuracy at different time steps $T$.}
    \vspace{-1mm}
    \label{tab1}
\end{table}

\section{Experimental Results}

\subsection{Experimental Setup}

\begin{table*}[tb]
\centering
\footnotesize
\resizebox{\textwidth}{!}
    {
    \begin{tabular}{ccccccccccccc}
    \toprule
     \textbf{Methods}&\textbf{Datasets}&  \textbf{Architecture}& \textbf{ANN}  & \textbf{T=1} & \textbf{T=2} & \textbf{T=4} & \textbf{T=8} & \textbf{T=16} & \textbf{T=32} & \textbf{T=64} & \textbf{T=128}& \textbf{T=256}\\
    \midrule

    QCFS\cite{bu2021optimal} & \multirow{4}{*}{CIFAR-10} & \multirow{4}{*}{ResNet-18}  & \textbf{96.04\%} & - & 75.44\% & 90.43\% & 94.82\% & 95.92\%& 96.08\% &96.06\% & - &96.06\%\\
    SlipReLU~\cite{jiang2023unified} &  &   & 94.61\% & 93.11\% & 93.97\% & 94.59\% & 94.92\% & 95.18\% & 95.07\% & 94.81\% & - &94.67\%\\

    TTRBR~\cite{meng2022training} &  &   & 95.27\% & - & - & - & -& 93.99\% & 94.77\% & 95.04\% & 95.18\% & -\\

    \textbf{Noise Adaptor} &                     &   &94.97\%&\textbf{93.71\%}& \textbf{95.26\%} &\textbf{95.95\%}& \textbf{96.61\%} & \textbf{96.72\%} &\textbf{96.74\%} &\textbf{96.72\%} &\textbf{96.69\%} &\textbf{96.68\%}\\

    \midrule

    QCFS\cite{bu2021optimal} & \multirow{7}{*}{ImageNet} & ResNet-34 & 74.32\% & - & - & - & - & 59.35\%& 69.37\% &72.35\% & 73.15\% &73.39\%\\
    
    SlipReLU~\cite{jiang2023unified} &  & ResNet-34 & 75.08\% & -& - & - & - & 43.76\% & 66.61\% & 72.71\% & 74.01\% &-\\

    TTRBR~\cite{meng2022training} &  & ResNet-34  & 74.24\% & - & - & - & -& - & - & - & - & 74.18\%\\
    
    TTRBR~\cite{meng2022training} &  & ResNet-50  & 76.02\% & - & - & - & -& - & - & - & - & 75.04\%\\

    TTRBR~\cite{meng2022training} &  & ResNet-101  & 76.82\% & - & - & - & -& - & - & - & - & 75.72\%\\
    \textbf{Noise Adaptor} &                    & ResNet-34 &71.26\%&\textbf{25.73\%}& \textbf{36,31\%} &\textbf{51.02\%}& \textbf{63.17\%} 
    & \textbf{69.71\%}  &\textbf{72.32\%} &73.03\% &73.07\%&73.02\%\\

    \textbf{Noise Adaptor} &                    & ResNet-50 &75.79\%&9.22\%& 12.00\% &19.33\%& 37.08\% 
    & 58.76\%  &71.50\% &\textbf{75.69\%} &\textbf{76.95\%}&77.07\%\\

    \textbf{Noise Adaptor} &                    & ResNet-101 &78.61\%&6.06\%& 6.87\% &9.23\%& 17.77\% 
    & 37.04\%  &58.96\% &71.89\% &76.81\%&\textbf{78.44\%}\\

    \textbf{Noise Adaptor} &                    & ResNet-152 &\textbf{79.29}\%&4.44\%& 5.80\% &8.67\%& 14.29\% 
    & 24.02\%  &40.87\% &59.58\% &70.76\%&76.04\%\\

    \bottomrule
    \end{tabular}}
    \vspace{0.2mm}
        \caption{Comparisons with other quant-ANN-to-SNN conversion algorithms. The comparisons are limited within algorithms without noise correction during SNN inference.}
        \vspace{-0.2cm}
    \label{tab3}
\end{table*}

We validate the effectiveness of our method on CIFAR-10 \cite{krizhevsky2009learning} and ImageNet \cite{deng2009imagenet}, utilizing the ResNet network architecture. These datasets are chosen for evaluations on image classification tasks, facilitating direct comparisons with results reported in other papers. We implement quantization-aware training (QAT) based on pre-trained full-precision ResNet models to quantize the activation. Before training, we replace max-pooling with average-pooling and substitute the ReLU activation function with the quantization function described in Equation \eqref{eq5}, while keeping the network weights in full precision. The quantization scale factor $s^l$ is initialized in a manner that minimizes the $\mathcal{L}_2$ difference between the activation of full-precision model and those of the quantized model. This initialization is performed sequentially, layer-by-layer, until $s^l$ is initialized across all layers. To prevent the gradient of the quantization scale factor $s^l$ from becoming excessively large, we apply a gradient scaling factor to regulate its amplitudes, thereby aiding the model in achieving better convergence. Training is carried out using stochastic gradient descent (SGD) with a cross-entropy loss function and a cosine learning rate scheduler. The initial learning rate is set to $0.05$, with a weight decay of $5 \times 10^{-4}$ for CIFAR-10 and $2.5 \times 10^{-5}$ for ImageNet. The models are trained for 400 epochs on CIFAR-10 and 600 epochs on ImageNet. The quantization upper bound $p$ is set to $2$ for CIFAR-10 and $3$ for ImageNet, unless specified otherwise.

After training, we transfer all weights and biases from the source ANN to the converted SNN, setting the threshold $th^l$ as outlined in Equation \eqref{eq4}. To address systematic errors associated with the round-to-nearest method used during ANN quantization, we pre-charge the membrane potential of the spiking neurons in all hidden layers to $0.5th^l$ at the first time step during SNN simulation. In these simulations, inputs are analog-coded \cite{rueckauer2017conversion} with a time resolution $\Delta t$ of $1\,$ms, and the adopted spiking neuronal model is described by Equation \eqref{eq1} and Equation \eqref{eq2}.

\subsection{Latency and Accuracy Improvements by Noise Adaptor}
Table \ref{tab1} presents the impact of Noise Adaptor on SNN performance across the CIFAR-10 and ImageNet datasets. The baseline for comparison is the ANN model trained without Noise Adaptor during quantization-aware training, as outlined in Equation \eqref{eq3}. 

On CIFAR-10, the results indicate that Noise Adaptor significantly enhances SNN performance. The accuracy consistently improves as the simulation time step $T$ increases, reaching a peak of 96.72\% when $T=16$. A similar trend is observed on ImageNet, where the SNN model utilizing the Noise Adaptor demonstrates substantial improvements across all $T$ values compared to the baseline, achieving a peak accuracy of $76.95\%$ at $T=128$. These findings demonstrate the effectiveness of Noise Adaptor in boosting SNN accuracy across varying datasets and time steps.

Interestingly, the data also reveals that a high ANN accuracy is not a prerequisite for attaining a high SNN accuracy. In our CIFAR-10 experiments, we observed a slight reduction in ANN accuracy when Noise Adaptor was applied. Specifically, models with Noise Adaptor achieved an ANN accuracy of $94.97\%$, which is $0.53\%$ lower compared to the model without it. Despite this reduction, the inclusion of Noise Adaptor consistently led to superior SNN inference accuracy across all time steps. A similar trend was observed in our ImageNet experiments, where Noise Adaptor reduced ANN accuracy but enhanced SNN performance. We attribute this decrease in ANN accuracy to the stochasticity introduced by Noise Adaptor, which negatively impacts ANN performance. However, this stochasticity appears to facilitate the Quant-ANN-to-SNN conversion process, as evidenced by the improved SNN accuracy after applying Noise Adaptor.

\subsection{Comparision to Other Quant-ANN-to-SNN Conversion Methods}
In this section, we compare our proposed method with existing quant-ANN-to-SNN conversion techniques on CIFAR-10 and ImageNet datasets. Specifically, we compare with QCFS \cite{ding2021optimal}, SlipReLU \cite{jiang2023unified}, and TTRBR \cite{meng2022training}. Similar to our approach, these methods do not incorporate runtime noise correction strategies, such as negative spikes or two-stage simulations during the inference of SNNs, ensuring a fair basis for comparison.

The results are presented in Table \ref{tab3}. Our method consistently achieves the highest SNN accuracy across all time steps. On CIFAR-10, our method attained accuracies of $95.26\%$ at $T=2$ and $95.95\% $ at $T=4$, significantly outperforming other reported results. The peak accuracy of our method reaches $96.74\%$ at $T=32$, surpassing the previous best-reported accuracy of $96.08\%$, which was achieved by QCFS at $T=32$. Similarly, on ImageNet, our method demonstrates a clear performance advantage. While QCFS and SlipReLU achieve SNN accuracies above $ 70\%$ only at $T=64$, our method surpasses this threshold in just $32$ time steps.

\begin{table}[htb!]
    \centering

     \resizebox{\columnwidth}{!}
     {
    \begin{tabular}{cccccccc}
    \toprule
    \multirow{2}{*}{\textbf{Noise Correction Method}} & \multicolumn{6}{c}{\textbf{SNN accuracy on CIFAR-10}}\\
    \cmidrule(lr){2-7}

&T=1 & T=2 & T=4 & T=8 & T=16 & T $\geqslant$ 256  \\
    \midrule
None (Noise Adaptor)	&93.71& 95.26 &95.95& 96.61 & 96.72 & 96.68  \\
Negative Spikes	&93.71& 95.87 &96.55& 96.70 & 96.71& 96.73 \\

Offset Spikes (lightweight)&95.82& 96.25 &96.33& 96.66 & 96.67& 96.66 \\
    \midrule
    \multirow{2}{*}{\textbf{Noise Correction Method}} & \multicolumn{6}{c}{\textbf{SNN Accuracy on ImageNet}}\\
    \cmidrule(lr){2-7}

&T=1 & T=2 & T=4 & T=8 & T=16 & T $\geqslant$ 256  \\
    \midrule
None (Noise Adaptor)	&9.22& 12.00 &19.33& 37.08 & 58.76 & 77.07  \\
Negative Spikes	&9.22& 55.32 &71.79& 74.66 & 75.93& 75.81 \\

Offset Spikes (lightweight)&64.33& 65.56 &68.53& 70.66 & 72.35& 75.56 \\
      \bottomrule
    \end{tabular} 
     } 
    \vspace{-1mm}
    \caption{The compatibility of Noise Adaptor with noise corrections techniques applied during SNN inference. The network architecture is ResNet-18 on CIFAR-10 and ResNet-50 on ImageNet. Accuracy is expressed as percentage (\%).}
    \vspace{-1mm}
    \label{tab4}
\end{table}

\subsection{Compatability to Other Noise Correction Techniques}
While our proposed method is designed to handle occasional noise during ANN training, it can also be effectively integrated with other noise correction techniques during SNN runtime to enhance overall SNN performance.

In Table \ref{tab4}, we examine the compatibility of our proposed Noise Adaptor with various noise correction methods, including negative spikes \cite{li2022quantization} and the two-stage offset spikes correction \cite{hao2023bridging}, applied to the CIFAR-10 and ImageNet datasets. The results show that integrating our approach with these noise correction techniques enhances SNN performance, especially at early time steps. This suggests that for scenarios where ultra-fast SNN inference is critical, applying runtime noise correction techniques can be beneficial.

However, it is important to recognize that these noise correction techniques may also introduce additional computational overhead during inference. We also observe that, as the number of time steps increases, networks employing only the Noise Adaptor gradually reach the accuracy levels of those that use noise correction methods.

\subsection{Scalability Study on Network Depth}
In this section, we assess the scalability of the Noise Adaptor. Table \ref{tab3} illustrates the performance of various ResNet architectures on the ImageNet dataset, each integrated with the Noise Adaptor during ANN activation quantization training, with a quantization upper bound of $p=3$. The results indicate a clear trend: as the network depth increases, there is a consistent improvement in ANN accuracy, with the deepest network, ResNet-152, achieving the highest accuracy at 79.29\%. Similarly, SNN accuracy also improves with increasing network depth, highlighting the Noise Adaptor's effectiveness in enabling successful ANN-to-SNN conversions even for very deep networks.

However, ResNet-152 requires more than 256 time steps to approach the target ANN accuracy, suggesting that such deep neural networks may not be practical for SNN implementation due to their high computational demand. Notably, ResNet-34 achieves an accuracy of 72.32\% in just 32 time steps, while ResNet-50 reaches a competitive accuracy of 75.69\% in 64 time steps. These findings suggest that deeper networks generally necessitate more time steps to achieve the desired accuracy.

\begin{figure}
\centering
\includegraphics[width=0.35\textwidth]{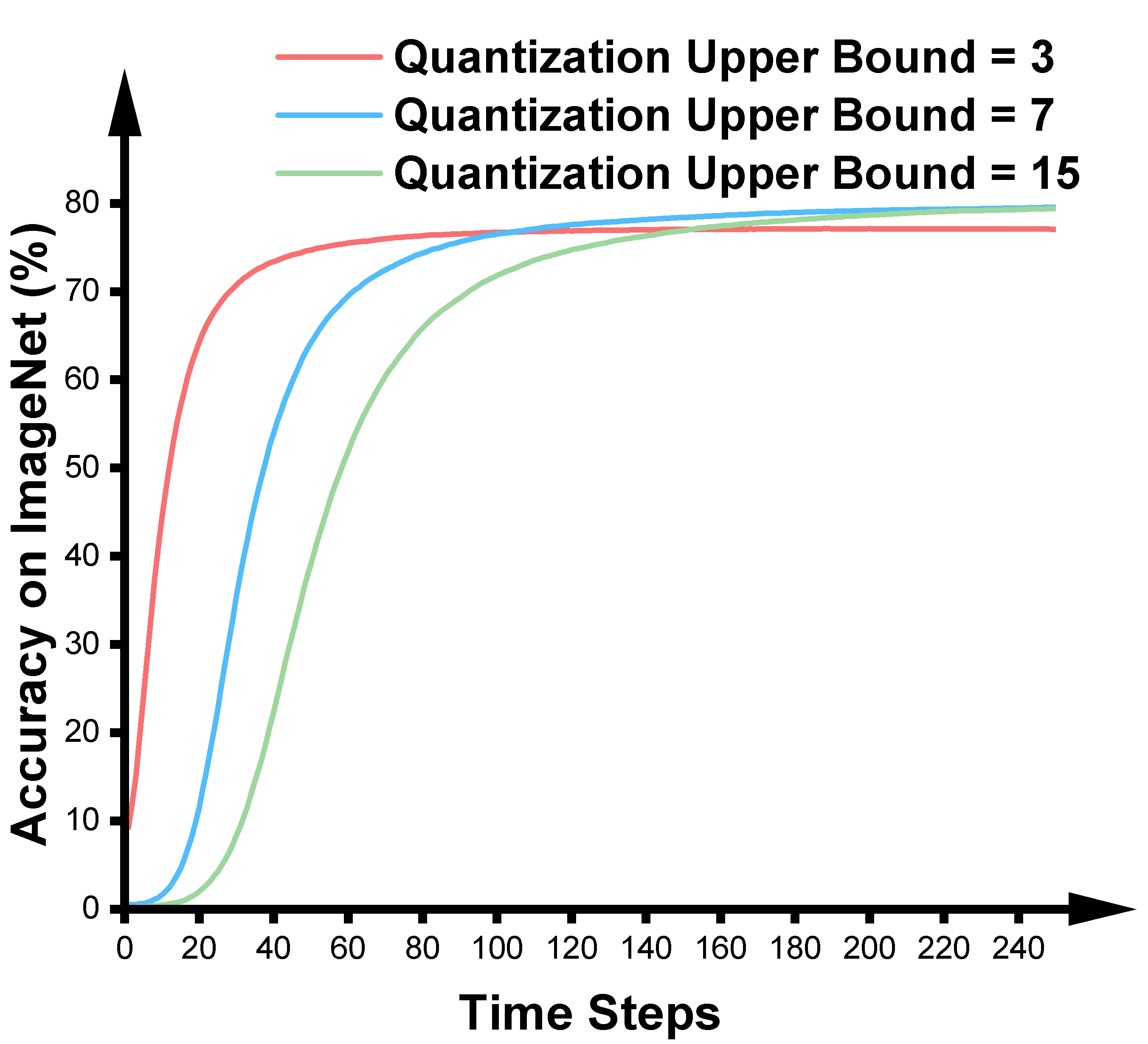} 

\caption{SNN accuracy over time steps for ResNet-50 with quantization upper bounds $p$ set at 3, 7, and 15. Results show that a small $p$ initially improves accuracy more rapidly, but a higher $p$ may surpass it over time.}
\label{fig3}
\end{figure}

\subsection{Quantitative Analysis of Quantization States}

In the context of low-bit ANN training and quant-ANN-to-SNN conversion, the quantization upper bound \( p \) is a crucial hyperparameter that significantly impacts both the accuracy of the ANN and the latency and target accuracy of the resulting SNN. Figure \ref{fig3} illustrates the SNN accuracy over time steps for ResNet-50, with quantization upper bounds set at 3, 7, and 15, respectively. The figure shows that a quantization upper bound of \( p = 3 \) leads to the fastest accuracy improvements in the initial time steps. However, by around 100 time steps, the accuracy for \( p = 7 \) surpasses that of \( p = 3 \). Interestingly, the accuracy for \( p = 15 \) consistently remains lower than that of \( p = 7 \) throughout the observed time window of 256 steps. These findings underscore the importance of carefully selecting the quantization upper bound \( p \) when constructing SNNs.

\section{Conclusion}
The Noise Adaptor contributes to research on converting low-bit ANNs into low-latency SNNs, effectively addressing key challenges in the field. By incorporating noise during ANN training, our approach enhances the model's awareness of spike dynamics, leading to improved SNN accuracy. The proposed method is efficient and scalable, successfully demonstrated on several deep architectures and showing competitive performance.

{
    \small
    \bibliographystyle{ieeenat_fullname}
    \bibliography{main}

\begin{thebibliography}{41}
\providecommand{\natexlab}[1]{#1}
\providecommand{\url}[1]{\texttt{#1}}
\expandafter\ifx\csname urlstyle\endcsname\relax
  \providecommand{\doi}[1]{doi: #1}\else
  \providecommand{\doi}{doi: \begingroup \urlstyle{rm}\Url}\fi

\bibitem[Bengio et~al.(2013)Bengio, L{\'e}onard, and Courville]{bengio2013estimating}
Yoshua Bengio, Nicholas L{\'e}onard, and Aaron Courville.
\newblock Estimating or propagating gradients through stochastic neurons for conditional computation.
\newblock \emph{arXiv preprint arXiv:1308.3432}, 2013.

\bibitem[Bogdan et~al.(2021)Bogdan, Marcinn{\`o}, Casellato, Casali, Rowley, Hopkins, Leporati, D'Angelo, and Rhodes]{bogdan2021towards}
Petru{\c{t}}~A Bogdan, Beatrice Marcinn{\`o}, Claudia Casellato, Stefano Casali, Andrew~GD Rowley, Michael Hopkins, Francesco Leporati, Egidio D'Angelo, and Oliver Rhodes.
\newblock Towards a bio-inspired real-time neuromorphic cerebellum.
\newblock \emph{Frontiers in cellular neuroscience}, 15:\penalty0 622870, 2021.

\bibitem[Bu et~al.(2021)Bu, Fang, Ding, Dai, Yu, and Huang]{bu2021optimal}
Tong Bu, Wei Fang, Jianhao Ding, PengLin Dai, Zhaofei Yu, and Tiejun Huang.
\newblock Optimal ann-snn conversion for high-accuracy and ultra-low-latency spiking neural networks.
\newblock In \emph{International Conference on Learning Representations}, 2021.

\bibitem[Burr et~al.(2017)Burr, Shelby, Sebastian, Kim, Kim, Sidler, Virwani, Ishii, Narayanan, Fumarola, et~al.]{burr2017neuromorphic}
Geoffrey~W Burr, Robert~M Shelby, Abu Sebastian, Sangbum Kim, Seyoung Kim, Severin Sidler, Kumar Virwani, Masatoshi Ishii, Pritish Narayanan, Alessandro Fumarola, et~al.
\newblock Neuromorphic computing using non-volatile memory.
\newblock \emph{Advances in Physics: X}, 2\penalty0 (1):\penalty0 89--124, 2017.

\bibitem[{\c{C}}akal et~al.(2023){\c{C}}akal, Ulusoy, and Muir]{ccakal2023training}
U{\u{g}}urcan {\c{C}}akal, Ilkay Ulusoy, and Dylan~R Muir.
\newblock Training and deploying spiking nn applications to the mixed-signal neuromorphic chip dynap-se2 with rockpool.
\newblock \emph{arXiv preprint arXiv:2303.12167}, 2023.

\bibitem[Chen et~al.(2020)Chen, Li, Li, Miles, Indiveri, Furber, Pavlidis, and Moutafis]{chen2020nanoscale}
Runze Chen, Chen Li, Yu Li, James~J Miles, Giacomo Indiveri, Steve Furber, Vasilis~F Pavlidis, and Christoforos Moutafis.
\newblock Nanoscale room-temperature multilayer skyrmionic synapse for deep spiking neural networks.
\newblock \emph{Physical Review Applied}, 14\penalty0 (1):\penalty0 014096, 2020.

\bibitem[Deng et~al.(2009)Deng, Dong, Socher, Li, Li, and Fei-Fei]{deng2009imagenet}
Jia Deng, Wei Dong, Richard Socher, Li-Jia Li, Kai Li, and Li Fei-Fei.
\newblock Imagenet: A large-scale hierarchical image database.
\newblock In \emph{2009 IEEE conference on computer vision and pattern recognition}, pages 248--255. Ieee, 2009.

\bibitem[Deng and Gu(2021)]{deng2021optimal}
Shikuang Deng and Shi Gu.
\newblock Optimal conversion of conventional artificial neural networks to spiking neural networks.
\newblock \emph{arXiv preprint arXiv:2103.00476}, 2021.

\bibitem[Diehl et~al.(2015)Diehl, Neil, Binas, Cook, Liu, and Pfeiffer]{diehl2015fast}
Peter~U Diehl, Daniel Neil, Jonathan Binas, Matthew Cook, Shih-Chii Liu, and Michael Pfeiffer.
\newblock Fast-classifying, high-accuracy spiking deep networks through weight and threshold balancing.
\newblock In \emph{2015 International Joint Conference on Neural Networks (IJCNN)}, pages 1--8. ieee, 2015.

\bibitem[Ding et~al.(2021)Ding, Yu, Tian, and Huang]{ding2021optimal}
Jianhao Ding, Zhaofei Yu, Yonghong Tian, and Tiejun Huang.
\newblock Optimal ann-snn conversion for fast and accurate inference in deep spiking neural networks.
\newblock \emph{arXiv preprint arXiv:2105.11654}, 2021.

\bibitem[Faisal et~al.(2008)Faisal, Selen, and Wolpert]{faisal2008noise}
A~Aldo Faisal, Luc~PJ Selen, and Daniel~M Wolpert.
\newblock Noise in the nervous system.
\newblock \emph{Nature reviews neuroscience}, 9\penalty0 (4):\penalty0 292--303, 2008.

\bibitem[Fang et~al.(2021)Fang, Yu, Chen, Huang, Masquelier, and Tian]{fang2021deep}
Wei Fang, Zhaofei Yu, Yanqi Chen, Tiejun Huang, Timoth{\'e}e Masquelier, and Yonghong Tian.
\newblock Deep residual learning in spiking neural networks.
\newblock \emph{arXiv preprint arXiv:2102.04159}, 2021.

\bibitem[Gammaitoni et~al.(1998)Gammaitoni, H{\"a}nggi, Jung, and Marchesoni]{gammaitoni1998stochastic}
Luca Gammaitoni, Peter H{\"a}nggi, Peter Jung, and Fabio Marchesoni.
\newblock Stochastic resonance.
\newblock \emph{Reviews of modern physics}, 70\penalty0 (1):\penalty0 223, 1998.

\bibitem[Han et~al.(2020)Han, Srinivasan, and Roy]{han2020rmp}
Bing Han, Gopalakrishnan Srinivasan, and Kaushik Roy.
\newblock Rmp-snn: Residual membrane potential neuron for enabling deeper high-accuracy and low-latency spiking neural network.
\newblock In \emph{Proceedings of the IEEE/CVF conference on computer vision and pattern recognition}, pages 13558--13567, 2020.

\bibitem[Hao et~al.(2023{\natexlab{a}})Hao, Bu, Ding, Huang, and Yu]{hao2023reducing}
Zecheng Hao, Tong Bu, Jianhao Ding, Tiejun Huang, and Zhaofei Yu.
\newblock Reducing ann-snn conversion error through residual membrane potential.
\newblock \emph{arXiv preprint arXiv:2302.02091}, 2023{\natexlab{a}}.

\bibitem[Hao et~al.(2023{\natexlab{b}})Hao, Ding, Bu, Huang, and Yu]{hao2023bridging}
Zecheng Hao, Jianhao Ding, Tong Bu, Tiejun Huang, and Zhaofei Yu.
\newblock Bridging the gap between anns and snns by calibrating offset spikes.
\newblock \emph{arXiv preprint arXiv:2302.10685}, 2023{\natexlab{b}}.

\bibitem[Hu et~al.(2023)Hu, Zheng, Jiang, and Pan]{hu2023fast}
Yangfan Hu, Qian Zheng, Xudong Jiang, and Gang Pan.
\newblock Fast-snn: Fast spiking neural network by converting quantized ann.
\newblock \emph{arXiv preprint arXiv:2305.19868}, 2023.

\bibitem[Hwang et~al.(2021)Hwang, Chang, Oh, Min, Jang, Park, Yu, Lee, and Park]{hwang2021low}
Sungmin Hwang, Jeesoo Chang, Min-Hye Oh, Kyung~Kyu Min, Taejin Jang, Kyungchul Park, Junsu Yu, Jong-Ho Lee, and Byung-Gook Park.
\newblock Low-latency spiking neural networks using pre-charged membrane potential and delayed evaluation.
\newblock \emph{Frontiers in Neuroscience}, 15:\penalty0 135, 2021.

\bibitem[Jiang et~al.(2023)Jiang, Anumasa, De~Masi, Xiong, and Gu]{jiang2023unified}
Haiyan Jiang, Srinivas Anumasa, Giulia De~Masi, Huan Xiong, and Bin Gu.
\newblock A unified optimization framework of ann-snn conversion: Towards optimal mapping from activation values to firing rates.
\newblock 2023.

\bibitem[Kempter et~al.(1998)Kempter, Gerstner, Van~Hemmen, and Wagner]{kempter1998extracting}
Richard Kempter, Wulfram Gerstner, J~Leo Van~Hemmen, and Hermann Wagner.
\newblock Extracting oscillations: Neuronal coincidence detection with noisy periodic spike input.
\newblock \emph{Neural computation}, 10\penalty0 (8):\penalty0 1987--2017, 1998.

\bibitem[Krizhevsky et~al.(2009)Krizhevsky, Hinton, et~al.]{krizhevsky2009learning}
Alex Krizhevsky, Geoffrey Hinton, et~al.
\newblock Learning multiple layers of features from tiny images.
\newblock 2009.

\bibitem[Li et~al.(2022)Li, Ma, and Furber]{li2022quantization}
Chen Li, Lei Ma, and Steve~B Furber.
\newblock Quantization framework for fast spiking neural networks.
\newblock \emph{Frontiers in Neuroscience}, page 1055, 2022.

\bibitem[Li et~al.(2023)Li, Jones, and Furber]{li2023unleashing}
Chen Li, Edward Jones, and Steve Furber.
\newblock Unleashing the potential of spiking neural networks by dynamic confidence.
\newblock \emph{arXiv preprint arXiv:2303.10276}, 2023.

\bibitem[Li et~al.(2021{\natexlab{a}})Li, Deng, Dong, Gong, and Gu]{li2021free}
Yuhang Li, Shikuang Deng, Xin Dong, Ruihao Gong, and Shi Gu.
\newblock A free lunch from ann: Towards efficient, accurate spiking neural networks calibration.
\newblock \emph{arXiv preprint arXiv:2106.06984}, 2021{\natexlab{a}}.

\bibitem[Li et~al.(2021{\natexlab{b}})Li, Guo, Zhang, Deng, Hai, and Gu]{li2021differentiable}
Yuhang Li, Yufei Guo, Shanghang Zhang, Shikuang Deng, Yongqing Hai, and Shi Gu.
\newblock Differentiable spike: Rethinking gradient-descent for training spiking neural networks.
\newblock \emph{Advances in Neural Information Processing Systems}, 34, 2021{\natexlab{b}}.

\bibitem[Ma et~al.(2023)Ma, Yan, and Tang]{ma2023exploiting}
Gehua Ma, Rui Yan, and Huajin Tang.
\newblock Exploiting noise as a resource for computation and learning in spiking neural networks.
\newblock \emph{arXiv preprint arXiv:2305.16044}, 2023.

\bibitem[Maass(1997)]{maass1997networks}
Wolfgang Maass.
\newblock Networks of spiking neurons: the third generation of neural network models.
\newblock \emph{Neural networks}, 10\penalty0 (9):\penalty0 1659--1671, 1997.

\bibitem[Mayr et~al.(2019)Mayr, Hoeppner, and Furber]{mayr2019spinnaker}
Christian Mayr, Sebastian Hoeppner, and Steve Furber.
\newblock Spinnaker 2: A 10 million core processor system for brain simulation and machine learning.
\newblock \emph{arXiv preprint arXiv:1911.02385}, 2019.

\bibitem[Meng et~al.(2022)Meng, Yan, Xiao, Wang, Lin, and Luo]{meng2022training}
Qingyan Meng, Shen Yan, Mingqing Xiao, Yisen Wang, Zhouchen Lin, and Zhi-Quan Luo.
\newblock Training much deeper spiking neural networks with a small number of time-steps.
\newblock \emph{Neural Networks}, 153:\penalty0 254--268, 2022.

\bibitem[Neftci et~al.(2019)Neftci, Mostafa, and Zenke]{neftci2019surrogate}
Emre~O Neftci, Hesham Mostafa, and Friedemann Zenke.
\newblock Surrogate gradient learning in spiking neural networks: Bringing the power of gradient-based optimization to spiking neural networks.
\newblock \emph{IEEE Signal Processing Magazine}, 36\penalty0 (6):\penalty0 51--63, 2019.

\bibitem[Orchard et~al.(2021)Orchard, Frady, Rubin, Sanborn, Shrestha, Sommer, and Davies]{orchard2021efficient}
Garrick Orchard, E~Paxon Frady, Daniel Ben~Dayan Rubin, Sophia Sanborn, Sumit~Bam Shrestha, Friedrich~T Sommer, and Mike Davies.
\newblock Efficient neuromorphic signal processing with loihi 2.
\newblock In \emph{2021 IEEE Workshop on Signal Processing Systems (SiPS)}, pages 254--259. IEEE, 2021.

\bibitem[Pehle et~al.(2022)Pehle, Billaudelle, Cramer, Kaiser, Schreiber, Stradmann, Weis, Leibfried, M{\"u}ller, and Schemmel]{pehle2022brainscales}
Christian Pehle, Sebastian Billaudelle, Benjamin Cramer, Jakob Kaiser, Korbinian Schreiber, Yannik Stradmann, Johannes Weis, Aron Leibfried, Eric M{\"u}ller, and Johannes Schemmel.
\newblock The brainscales-2 accelerated neuromorphic system with hybrid plasticity.
\newblock \emph{Frontiers in Neuroscience}, 16, 2022.

\bibitem[Roy et~al.(2019)Roy, Jaiswal, and Panda]{roy2019towards}
Kaushik Roy, Akhilesh Jaiswal, and Priyadarshini Panda.
\newblock Towards spike-based machine intelligence with neuromorphic computing.
\newblock \emph{Nature}, 575\penalty0 (7784):\penalty0 607--617, 2019.

\bibitem[Rueckauer et~al.(2017)Rueckauer, Lungu, Hu, Pfeiffer, and Liu]{rueckauer2017conversion}
Bodo Rueckauer, Iulia-Alexandra Lungu, Yuhuang Hu, Michael Pfeiffer, and Shih-Chii Liu.
\newblock Conversion of continuous-valued deep networks to efficient event-driven networks for image classification.
\newblock \emph{Frontiers in neuroscience}, 11:\penalty0 682, 2017.

\bibitem[Schemmel et~al.(2010)Schemmel, Br{\"u}derle, Gr{\"u}bl, Hock, Meier, and Millner]{schemmel2010wafer}
Johannes Schemmel, Daniel Br{\"u}derle, Andreas Gr{\"u}bl, Matthias Hock, Karlheinz Meier, and Sebastian Millner.
\newblock A wafer-scale neuromorphic hardware system for large-scale neural modeling.
\newblock In \emph{2010 IEEE International Symposium on Circuits and Systems (ISCAS)}, pages 1947--1950. IEEE, 2010.

\bibitem[Sengupta et~al.(2019)Sengupta, Ye, Wang, Liu, and Roy]{sengupta2019going}
Abhronil Sengupta, Yuting Ye, Robert Wang, Chiao Liu, and Kaushik Roy.
\newblock Going deeper in spiking neural networks: Vgg and residual architectures.
\newblock \emph{Frontiers in neuroscience}, 13:\penalty0 95, 2019.

\bibitem[Shrestha and Orchard(2018)]{shrestha2018slayer}
Sumit~B Shrestha and Garrick Orchard.
\newblock Slayer: Spike layer error reassignment in time.
\newblock \emph{Advances in neural information processing systems}, 31, 2018.

\bibitem[Stein et~al.(2005)Stein, Gossen, and Jones]{stein2005neuronal}
Richard~B Stein, E~Roderich Gossen, and Kelvin~E Jones.
\newblock Neuronal variability: noise or part of the signal?
\newblock \emph{Nature Reviews Neuroscience}, 6\penalty0 (5):\penalty0 389--397, 2005.

\bibitem[Wu et~al.(2018)Wu, Deng, Li, Zhu, and Shi]{wu2018spatio}
Yujie Wu, Lei Deng, Guoqi Li, Jun Zhu, and Luping Shi.
\newblock Spatio-temporal backpropagation for training high-performance spiking neural networks.
\newblock \emph{Frontiers in neuroscience}, 12:\penalty0 331, 2018.

\bibitem[You et~al.(2024)You, Xu, Nie, Deng, Guo, Wang, and He]{you2024spikezip}
Kang You, Zekai Xu, Chen Nie, Zhijie Deng, Qinghai Guo, Xiang Wang, and Zhezhi He.
\newblock Spikezip-tf: Conversion is all you need for transformer-based snn.
\newblock \emph{arXiv preprint arXiv:2406.03470}, 2024.

\bibitem[Zhang et~al.(2018)Zhang, Zeng, Zhao, and Shi]{zhang2018plasticity}
Tielin Zhang, Yi Zeng, Dongcheng Zhao, and Mengting Shi.
\newblock A plasticity-centric approach to train the non-differential spiking neural networks.
\newblock In \emph{Proceedings of the AAAI conference on artificial intelligence}, 2018.

\end{thebibliography}
}

\clearpage
\onecolumn
\setcounter{page}{1}

\maketitlesupplementary

\theoremstyle{definition}
\newtheorem{theorem}{Theorem}

\section{Proof of Equation 10: Expected Mean of Activation Quantization Function in ANN}

In this section, we present a proof for Equation 10, demonstrating that the expected mean of ANN activation quantization function, after incorporating a Noise Adaptor, aligns with the rectified ReLU function.

\begin{theorem}
Let \( \hat{v}^l \) be the output of the function 
\begin{align}
\hat{v}^l = s^l \left\lfloor \text{clip}\left( \frac{v^l}{s^l} + \epsilon, 0, p \right) \right\rceil,
\end{align}
where \( \epsilon \) is a uniformly distributed random variable over the interval \( (-0.5, 0.5) \), i.e., \( \epsilon \sim \mathcal{U}(-0.5, 0.5) \). The mean output \( \mathbb{E}[\hat{v}^l] \) is given by:
\begin{align}
\mathbb{E}[\hat{v}^l] = \text{clip}(v^l, 0, s^l p).
\end{align}
\end{theorem}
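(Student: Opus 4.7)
The plan is to set $x = v^l/s^l$ and reduce the claim to the equivalent statement
\[
\mathbb{E}\big[\lfloor \text{clip}(x + \epsilon, 0, p)\rceil\big] \;=\; \text{clip}(x, 0, p),
\]
since multiplying both sides by $s^l$ and pulling it outside the expectation yields Equation~10. The whole argument rests on the standard dithered-rounding identity: for any $y \in \mathbb{R}$, writing $y = n + f$ with $n \in \mathbb{Z}$ and $f \in [-\tfrac{1}{2},\tfrac{1}{2})$, the uniform noise $\epsilon \sim \mathcal{U}(-\tfrac{1}{2},\tfrac{1}{2})$ gives $P(\lfloor y + \epsilon\rceil = n+1) = f$ and $P(\lfloor y+\epsilon\rceil = n) = 1-f$, so $\mathbb{E}[\lfloor y + \epsilon\rceil] = y$. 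I would state and prove this short lemma first, since it carries all the real probabilistic content.

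Next I would split on the value of $x$. In the bulk case $0.5 \le x \le p-0.5$, the argument $x + \epsilon$ lies in $(0, p)$ with probability one, so clipping acts as the identity and the dithered-rounding lemma gives expectation $x = \text{clip}(x, 0, p)$. In the far cases $x \le -0.5$ and $x \ge p+0.5$, the argument is almost-surely clipped to $0$ or $p$ respectively, matching the right-hand side trivially.

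The remaining work lies in the four boundary strips near $0$ and near $p$. For the representative strip $0 \le x < 0.5$, I would partition the support of $\epsilon$ into three pieces: $\{\epsilon < -x\}$ where the argument is negative and clipped-then-rounded to $0$; $\{-x \le \epsilon < 0.5 - x\}$ where the argument lies in $[0, 0.5)$ and rounds to $0$; and $\{\epsilon \ge 0.5 - x\}$ where the argument lies in $[0.5, x + 0.5) \subset [0.5, 1)$ and rounds to $1$. The probabilities are $0.5 - x$, $0.5$, and $x$ respectively, giving expectation $x$ as required. The strip $-0.5 < x < 0$ uses only the first two pieces and evaluates to $0$, and the two strips near $p$ are mirror images handled identically.

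The main obstacle, modest though it is, is the interaction between clipping and rounding on the boundary strips: the dithered-rounding identity applies to $\lfloor x + \epsilon\rceil$ but not directly to $\lfloor \text{clip}(x+\epsilon,0,p)\rceil$, and one must verify that the probability mass which clipping reassigns to the endpoint would already have rounded to that same endpoint in the unclipped case. This is essentially the observation that $x + \epsilon \in (-\tfrac{1}{2}, 0)$ rounds to $0$ anyway, so clipping is a no-op on outputs in that subregion; once this compatibility is checked the piecewise computation collapses cleanly to $\text{clip}(x, 0, p)$, and reinstating the factor $s^l$ completes the proof.
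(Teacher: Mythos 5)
Your proposal is correct and follows essentially the same route as the paper's proof: both reduce to $u = v^l/s^l$, rest on the dithered-rounding identity $\mathbb{E}\left[\left\lfloor y+\epsilon\right\rceil\right]=y$ for $\epsilon\sim\mathcal{U}(-0.5,0.5)$, and finish with a case split on $u$ relative to $0$ and $p$. Your explicit treatment of the four boundary strips --- checking that the probability mass reassigned by clipping would have rounded to the same endpoint anyway --- is the one point the paper's Case~1 ($0 \leq u < p$) handles only implicitly, so your version is slightly more careful but not a different argument.
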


\begin{proof}
Define \( u = \frac{v^l}{s^l} \). The function then becomes:
\begin{align}
\hat{v}^l = s^l \left\lfloor \text{clip}(u + \epsilon, 0, p) \right\rceil.
\end{align}
We will consider three cases based on the value of \( u \):

\textbf{Case 1:} \( 0 \leq u < p \).

After rounding, \( \left\lfloor \text{clip}(u + \epsilon, 0, p) \right\rceil \) will be rounded up or down depending on whether its value is higher or lower than the transition point \( \lfloor u \rfloor + 0.5 \), as given below:
\begin{align}
\left\lfloor \text{clip}(u + \epsilon, 0, p) \right\rceil = 
\begin{cases} 
\lfloor u \rfloor & \text{if } u + \epsilon < \lfloor u \rfloor + 0.5, \\
\lceil u \rceil & \text{if } u + \epsilon \geq \lfloor u \rfloor + 0.5.
\end{cases}
\end{align}
Let \( w = u - \lfloor u \rfloor \), which is the fractional part of \( u \). The condition \( u + \epsilon \geq \lfloor u \rfloor + 0.5 \) can be rewritten as:
\begin{align}
u + \epsilon &\geq \lfloor u \rfloor + 0.5 \\
w + \lfloor u \rfloor + \epsilon &\geq \lfloor u \rfloor + 0.5 \\
w + \epsilon &\geq 0.5 \\
\epsilon &\geq 0.5 - w.
\end{align}
Since \( \epsilon \sim \mathcal{U}(-0.5, 0.5) \), it follows that \( \epsilon \) is uniformly distributed between \(-0.5\) and \(0.5\). The probability that \( \epsilon \geq 0.5 - w \) is:
\begin{align}
\mathbb{P}(\epsilon \geq 0.5 - w) = \mathbb{P}(\mathcal{U}(0, 1) \geq 1 - w) = w.
\end{align}
Thus, the probability of rounding up is \( w \), and the probability of rounding down is \( 1 - w \). Therefore:
\begin{align}
\left\lfloor \text{clip}(u + \epsilon, 0, p) \right\rceil = 
\begin{cases} 
\lfloor u \rfloor & \text{with probability } 1 - w, \\
\lceil u \rceil & \text{with probability } w.
\end{cases}
\end{align}

To find the expected value, we use the expectation formula:
\begin{align}
\mathbb{E}\left[\left\lfloor \text{clip}(u + \epsilon, 0, p) \right\rceil\right] &= \lfloor u \rfloor \cdot (1 - w) + \lceil u \rceil \cdot w \\
&= \lfloor u \rfloor \cdot (1 - w) + (\lfloor u \rfloor + 1) \cdot w \\
&= \lfloor u \rfloor + w.
\end{align}
Since \( u = w + \lfloor u \rfloor \), it follows that:
\begin{align}
\mathbb{E}\left[\left\lfloor \text{clip}(u + \epsilon, 0, p) \right\rceil\right] = u.
\end{align}
Hence:
\begin{align}
\mathbb{E}[\hat{v}^l] &= s^l \mathbb{E}\left[\left\lfloor \text{clip}(u + \epsilon, 0, p) \right\rceil\right] \\
&= s^l \cdot u \\
&= v^l.
\end{align}

\textbf{Case 2:} \( u \geq p \).

For \( u \geq p \), the smallest value of \( u + \epsilon \) occurs when \( \epsilon = -0.5 \), which gives \( u + \epsilon \geq p - 0.5 \). After clipping and rounding, the value \( p - 0.5 \) will be rounded to \( p \). Hence, for all \( u \geq p \):
\begin{align}
\left\lfloor \text{clip}(u + \epsilon, 0, p) \right\rceil = p.
\end{align}
Thus:
\begin{align}
\mathbb{E}[\hat{v}^l] &= s^l \left\lfloor \text{clip}(u + \epsilon, 0, p) \right\rceil \\
&= s^l \cdot p.
\end{align}

\textbf{Case 3:} \( u < 0 \).

For \( u < 0 \), \( u + \epsilon \) is always less than 0.5. Hence, rounding forces the output to 0. Thus:
\begin{align}
\mathbb{E}[\hat{v}^l] = 0.
\end{align}

Combining these results, the expected output is:
\begin{align}
\mathbb{E}[\hat{v}^l] = 
\begin{cases} 
v^l & \text{if } 0 \leq \frac{v^l}{s^l} < p, \\
s^l \cdot p & \text{if } \frac{v^l}{s^l} \geq p, \\
0 & \text{if } \frac{v^l}{s^l} < 0.
\end{cases}
\end{align}
Thus, we have:
\begin{align}
\mathbb{E}[\hat{v}^l] = \text{clip}(v^l, 0, s^l p).
\end{align}
\end{proof}

\section{Pseudo Code for Implementing Noise Adaptor}

\begin{algorithm}
\caption{ANN Activation Quantization Function with Noise Adaptor}
\label{alg:algorithm}
\textbf{Input}: Input tensor $x$, scale factor $scale$, quantization lower bound $0$, quantization upper bound $p$ \\
\textbf{Output}: Quantized tensor $\hat{x}$ \\
\begin{algorithmic}[1] 

\State \textbf{Forward Pass:}
\State Compute $x\_scale = \frac{x}{scale}$
\State Add random noise: $x\_scale \mathrel{+}= \epsilon$
\State Clip $x\_scale$ to bounds: $x\_clip = \text{clip}(x\_scale, 0, p)$
\State Round the clipped values: $x\_round = \text{round}(x\_clip)$
\State Restore scaled values: $\hat{x} = x\_round \times scale$
\State \textbf{return} $\hat{x}$

\Statex 

\State \textbf{Backward Pass:}
\State Compute internal flag: $internal\_flag = (x\_clip > 0) \oplus (x\_clip \geq p)$
\State Compute gradient for activation: $grad\_activation = grad\_output \times internal\_flag$
\State Compute first part of scale gradient: $grad\_one = x\_clip \times internal\_flag$
\State Compute second part of scale gradient: $grad\_two = \text{round}(x\_clip)$
\State Compute element-wise scale gradient: $grad\_scale\_elem = grad\_two - grad\_one$
\State Compute final scale gradient: $grad\_scale = \frac{(grad\_scale\_elem \times grad\_output).sum()}{\sqrt{\text{len}(x\_clip) \times p}}$
\State \textbf{return} $grad\_activation$, $grad\_scale$, 

$grad\_lowerbound = 0$, $grad\_upperbound = 0$
\end{algorithmic}
\end{algorithm}

\end{document}